\newtheorem{theorem}{Theorem}
\newcommand{\ulnum}[1]{\underline{#1}}
\begin{document}

\title{Beyond Normality: Reliable A/B Testing with Non-Gaussian Data}


\author{Junpeng Gong}
\affiliation{%
  \institution{School of Mathematical Sciences, University of Chinese Academy of Sciences
  \city{Beijing}   
  \country{China}}
  \institution{SKLMS, Academy of
Mathematics and Systems Science, Chinese Academy of Sciences
\city{Beijing}   
\country{China}}
}
\email{gongjunpeng@amss.ac.cn}

\author{Chunkai Wang}\affiliation{\institution{ByteDance}\city{Beijing}   \country{China}}
\email{wangjunkai.70726@bytedance.com}

\author{Hao Li}\affiliation{\institution{ByteDance}\city{Beijing}   \country{China}}
\email{lihao.sky@bytedance.com}

\author{Jinyong Ma}\affiliation{\institution{ByteDance}\city{Beijing}   \country{China}}
\email{jinyongma@bytedance.com}

\author{Haoxuan Li}\affiliation{\institution{Peking University}\city{Beijing}   \country{China}}
\email{hxli@stu.pku.edu.cn}

\author{Xu He}\affiliation{\institution{SKLMS, Academy of
Mathematics and Systems Science, Chinese Academy of Sciences\city{Beijing}   \country{China}}}
\email{hexu@amss.ac.cn}

\begin{abstract}
A/B testing has become the cornerstone of decision-making in online markets, guiding how platforms launch new features, optimize pricing strategies, and improve user experience. In practice, we typically employ the pairwise $t$-test to compare outcomes between the treatment and control groups, thereby assessing the effectiveness of a given strategy. To be trustworthy, these experiments must keep Type I error (i.e., false positive rate) under control; otherwise, we may launch harmful strategies. However, in real-world applications, we find that A/B testing often fails to deliver reliable results. When the data distribution departs from normality or when the treatment and control groups differ in sample size, the commonly used pairwise $t$-test is no longer trustworthy. In this paper, we quantify how skewed, long tailed data and unequal allocation distort error rates and derive explicit formulas for the minimum sample size required for the $t$-test to remain valid. We find that many online feedback metrics require hundreds of millions samples to ensure reliable A/B testing. Thus we introduce an Edgeworth-based correction that provides more accurate $p$-values when the available sample size is limited. Offline experiments on a leading A/B testing platform corroborate the practical value of our theoretical minimum sample size thresholds and demonstrate that the corrected method substantially improves the reliability of A/B testing in real-world conditions.
\end{abstract}



\begin{CCSXML}
<ccs2012>
   <concept>
       <concept_id>10002950.10003648.10003662.10003666</concept_id>
       <concept_desc>Mathematics of computing~Hypothesis testing and confidence interval computation</concept_desc>
       <concept_significance>500</concept_significance>
       </concept>
 </ccs2012>
\end{CCSXML}

\ccsdesc[500]{Mathematics of computing~Hypothesis testing and confidence interval computation}

\keywords{Online experiments, A/B testing, Type I error, long-Tail, Unequal allocation}

\maketitle

\section{Introduction}

Online controlled experiments, or A/B testing, are the gold standard for evaluating new product ideas \citep{2018SQRLink,XieAurisset2016KDD}, ranking algorithms \citep{2013Adclick}, pricing policies, and recommendation strategies in online markets \citep{kohavi2020trustworthy,liu21link}.
They have been widely adopted by leading technology companies, including Google, Facebook, Microsoft, LinkedIn, and ByteDance \citep{deng16micro,xu15link,XieAurisset2016KDD}.
The validity of A/B testing depends on an accurate $p$-value, the probability of observing a result as or more extreme than the current one under the null hypothesis \citep{lehmann2005pvalue}.
In online experimentation, decisions rely on comparing the $p$-value with a predefined significance level, making its accuracy essential for trustworthy inference and reliable decision making in online markets \citep{Johari17}. 

The two sample $t$-test \citep{welch} is widely used in A/B testing to compare the effects of a treatment and a control strategy and to support decision-making.
First, the experiment platform collects user-level feedback and computes a t-statistic that summarizes the difference between the treatment and control groups. Second, it evaluates how likely it would be to observe a result at least as extreme as the one seen if the null hypothesis (i.e., no difference between two groups) were true. Then, we reject the null when the reported $p$-value is at most $\alpha$, which is set in advance. If the estimated mean for treatment exceeds the mean for control, the effect is interpreted as positive and the platform may launch the new strategy; otherwise, the strategy is not launched. 

In practice, the reliability of the pairwise $t$-test heavily depends on the data distribution. Specifically, the results are valid only if the distribution is asymptotically normal \citep{hall2004exact}. Otherwise, we will obtain an incorrect false positive probability (Type I error rate) that departs from the ideal ones \citep{Kohavi10unexpected}.
For a two-sided test, the Type~I error rate should be symmetrically split between the two tails, with each tail accounting for approximately half of the nominal significance level~$\alpha$.
When the right-tail Type~I error, that is, the probability of falsely approving a positive effect, exceeds half of the nominal level, the platform becomes overly permissive, increasing the risk of launching ineffective or even harmful strategies and degrading user experience.
Conversely, when the right-tail Type~I error is below half of the nominal level, promising strategies may be prematurely discarded.

To confirm the presence of non-normality, we conduct a motivating experiment using real-world data. Specifically, we repeatedly draw independent and identically distributed from the same dataset to form the control and treatment groups, thereby ensuring that the null hypothesis holds. We then compute the test statistic $T$ over $10^6$ replications to obtain its empirical distribution. 
As shown in Figure~\ref{fig:results-density}, the empirical distribution of the test statistic $T$ is asymmetric and deviates from normality. The right-tail rejection probability (orange) exceeds its nominal $\alpha/2$, while the left-tail rejection probability (blue) falls below it. In such cases, the test becomes systematically biased, more likely to approve ineffective strategies, and thus leads to wasted resources and suboptimal decisions. 
\begin{figure}[t]
\includegraphics[width=0.8\linewidth]{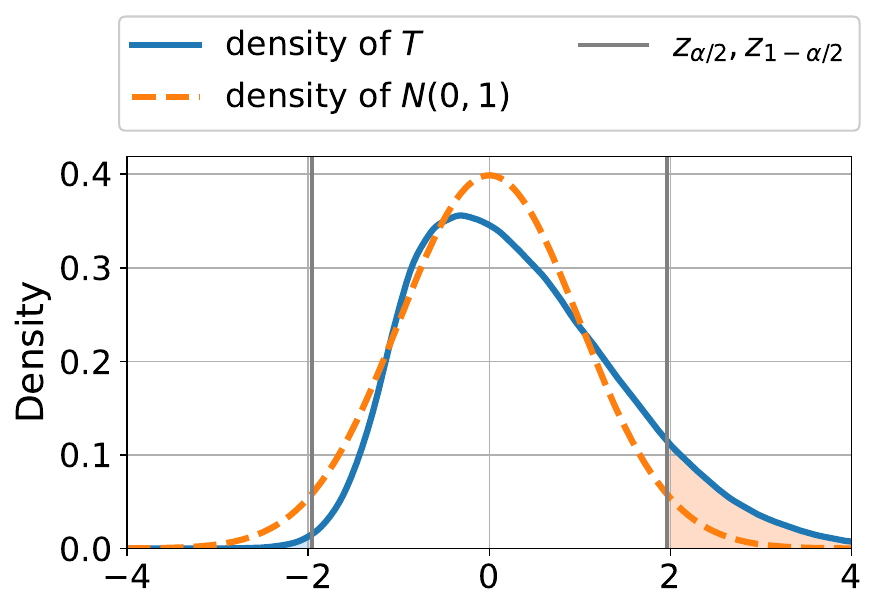}
\caption{Empirical density of the Welch's $t$-test statistic $T$ (blue), based on real data on the number of videos published by users of a popular online platform, with significance level $\alpha = 0.05$. The control and treatment sample sizes are 100 and 1,000, and the empirical distribution is compared with the standard normal $N(0,1)$ density (orange).}
\label{fig:results-density}
\end{figure}
Such deviations from normality are also observed in commonly used online metrics, including click-through rate, the number of videos published by users, and engagement duration.
The deviation mainly arises because these metrics are dominated by a large majority of low-activity users and a small minority with extremely high values.

To examine the convergence behavior of the $t$-test under such skewed conditions, \citet{Boos} conducted a systematic investigation to identify the minimum sample size required for accurate inference.
Through extensive simulations with common skewed distributions such as Gamma and Weibull, they quantified how the empirical Type~I error rate deviates from the nominal level across different degrees of skewness.
They established that the deviation in each tail is approximately proportional to the skewness divided by $\sqrt{n}$, where $n$ denotes the sample size. Finally, they derived a formula for the minimum sample size that maintains the Type~I error deviation within a prespecified tolerance.

Nevertheless, the formulas  were obtained empirically through simulation and regression, and thus lack general validity across diverse data distributions \citet{Boos}.
More importantly, online experiments introduce an additional factor that strongly influences the error rate, the imbalance in sample sizes between the treatment and control groups, i.e., unequal allocation.
Unequal allocation is common for two main reasons.
Firstly, in large scale A/B testing, platforms often assign fewer users to new or high risk strategies to minimize potential revenue loss and avoid negative user experiences. 
Second, local regulations or operational policies may cap the fraction of users exposed to treatment, leading to unequal allocation.
These imbalanced allocations can further distort the actual Type I error rate \citep{16unequal}. 
As a result, the empirical rules cannot accurately capture how the required sample size depends on the allocation ratio \citep{Boos}, which is a critical factor in modern online experimentation.


In response, this paper develops a unified theoretical framework for reliable A/B testing in general online experimental settings. We systematically analyze how the non-normality of the test statistic depend on the data distribution and traffic allocation. In particular, we establish how the skewness and kurtosis of the test statistic, which measure asymmetry and tail heaviness, are determined by the skewness and kurtosis of the treatment and control groups together with their allocation ratio. Building on this foundation, we derive a new formula for the minimum sample size required for accurate testing.  In practice, using real-world data, we find that many online metrics require sample sizes on the order of hundreds of millions to achieve reliable inference.
Therefore, we employ higher-order Edgeworth expansions \citep{hall2013bootstrap}, incorporating third- and fourth-order moments together with the allocation ratio to correct the $p$-value.

The main contributions of this paper are summarized as follows.
\begin{itemize}
\item We establish a closed-form relationship showing how data skewness and traffic imbalance jointly affect the departure of the test statistic from normality.
\item Using this formula, we analytically determine the minimum sample size needed to maintain accurate Type I error control, providing clear guidance for reliable online experimentation.
\item We introduce an Edgeworth based correction that produces more accurate $p$-value estimation in small samples, thereby supporting better decision making in online platforms. 
\end{itemize}

\section{Related Works}

\subsection{Non-Normality of Test Statistics in Online Controlled Experiments}

Online platforms rely on large-scale controlled experiments to guide product launches, ranking policies, pricing decisions, and recommendation strategies \citep{2018SQRLink,2013Adclick,XieAurisset2016KDD}. 
However, in practice, A/B testing implemented on online platforms often exhibits systematic deviations from nominal Type I error control.
\citet{Kohavi10unexpected} demonstrate through extensive A/A tests that even when no treatment effect exists, the false positive rate can reach up to 30\%, far exceeding the nominal 5\% level. 
\citet{larsen2024statistical} further emphasize that robust error rate control remains a central issue for trustworthy inference in large-scale experimentation systems. Recent studies have further examined the underlying causes of such deviations.
In particular, \citet{metric21typeI} find that metrics in online marketplaces are frequently skewed or long-tailed, causing the sampling distribution of test statistics to deviate markedly from normality and resulting in inaccuracies for normal-approximation–based test.
Furthermore, simulation evidence suggests that unequal allocation between treatment and control groups can aggravate this non-normality \citep{16unequal}.
Despite these advances, how data distribution and unequal allocation jointly affect the departure of test statistics from normality remains analytically underexplored.

\subsection{Reliable Testing for Online Controlled Experiments}
To ensure reliable hypothesis testing, numerous studies have examined different aspects of experimental validity. Variance reduction methods, such as regression adjustment and CUPED with pre-experiment co-variates, are designed to reduce metric variance and thereby accelerate convergence toward a normal limit \citep{deng2013improving}. Research on interference under increasing allocation investigates how gradually adding users affects validity and sensitivity \citep{increase23}. \citet{samplesize23} study the sample size required to maintain sufficient testing power for detecting small effects. Sequential and always-valid procedures maintain valid inference under continuous monitoring \citep{or22}. However, these directions primarily focus on controlling Type II error, which concerns ensuring adequate test power, rather than on the accuracy of Type I error control.
A complementary line of work proposes alternatives to the $t$-test. Resampling and randomization methods, including the bootstrap \citep{hall2013bootstrap} and permutation tests, are widely used to improve inference in small sample settings, although they can be computationally expensive at web scale.
\citet{Johnson1978} proposed modified $t$ statistics based on the Cornish–Fisher expansion to improve inference under asymmetry data, while \citet{Sutton1993} introduced composite decision rules that integrate bootstrap adjustments. \citet{chenlin1995} developed an Edgeworth-based test for upper-tailed inference that outperformed earlier approaches in small samples.
Foundational work further established the theoretical connections between Edgeworth expansions and bootstrap methods \citep{hall2013bootstrap}.
Collectively, this literature highlights that skewness can severely distort the normal approximation, but existing contributions are confined to one-sample contexts and do not address the additional complications posed by unequal allocation in two-sample designs.
\citet{Boos} investigates the finite-sample bias of the two-sample $t$-test and provides an empirical formula for the required sample size as a function of distributional asymmetry. Yet, this research does not consider unequal sample sizes between treatment and control groups.
Taken together, these findings highlight the need to examine and improve the accuracy of A/B testing in online experiments with skewed, long-tailed data and unequal allocation.

\section{Preliminaries}

\subsection{Two-Sample \texorpdfstring{$t$}{t}-Tests in A/B testing} 

A common way to evaluate an intervention in online experiments is to compare the average outcomes between a control group and a treatment group. This simple two-sample design forms the foundation of most statistical analyses in large-scale A/B testing.
We consider two independent groups, the control group $X_1,\cdots,X_{n_x}$ and a treatment group $Y_1,\cdots,Y_{n_y}$, with total sample size 
$N=n_x+n_y$. 
The population means of the two groups are $\mu_x=E(X_i)$, $\mu_y=E(Y_j)$ with corresponding sample means
$$\bar X = n_x^{-1}\sum_{i=1}^{n_x} X_i, \quad \bar Y = n_y^{-1}\sum_{j=1}^{n_y} Y_j.$$
The null hypothesis of interest is the equality of population means, which is $
H_0:\ \mu_x=\mu_y $.
In modern experimentation platforms, the population variances are typically unknown 
and may differ across groups. Denote the
variances be $\sigma_x^2 = E(X_i - \mu_x)^2$, $\sigma_y^2 = E(Y_j - \mu_y)^2$ with sample estimates
$$\hat\sigma_x^2 =  n_x^{-1} \sum_{i=1}^{n_x}(X_i-\bar X)^2, \quad \hat\sigma_y^2 =  n_y^{-1} \sum_{j=1}^{n_y}(Y_j-\bar Y)^2.$$
The standard approach for comparing two groups with possibly unequal variances is Welch’s $t$-test \citep{welch}, with the test statistic given by
\begin{equation*}
\label{equ:tstas}
T = \frac{\bar Y-\bar X}{\sqrt{\hat\sigma_x^2/n_x+\hat\sigma_y^2/n_y}} .
\end{equation*}
By the central limit theorem, $T$ is asymptotically normal under $H_0$, and inference 
is conducted by approximating its distribution with the standard normal. The two–sided 
$p$–value of $t$-test is therefore
\[
p_t = 2\bigl(1-\Phi(|T|)\bigr),
\]
where $\Phi(x)$ and $\phi(x)$ denote the cumulative and probability density functions of the standard normal distribution $N(0,1)$. 
The standard testing procedure compares the $p$ value with the nominal significance level $\alpha \in (0,1)$, which is predetermined before the experiment. If the $p$ value is smaller than $\alpha$, the null hypothesis is rejected, indicating that the means of the treatment and control groups are not equal.

In online platforms, the direction of rejection is often of primary interest. The objective is not only to determine whether the two means differ but also to identify which variant is beneficial. When the $p$ value is smaller than $\alpha$ and the test statistic $T$ is positive, we infer that $\mu_y > \mu_x$, indicating that the treatment outperforms the control and may thus be deployed to users. Conversely, when the $p$ value is smaller than $\alpha$ and $T$ is negative, we infer that $\mu_y < \mu_x$, suggesting that the control version yields better outcomes and should be retained. If the $p$ value is greater than or equal to $\alpha$, no statistically significant difference is detected, and the final decision may depend on auxiliary factors such as implementation cost or operational risk.
Each rejection direction carries distinct business implications for online platforms, underscoring the importance of maintaining accurate and well-balanced Type I error control across both tails.

\subsection{Error Metric and Problem Statement}

From the preceding discussion, we have described how the test operates in practice. In this section, we formally define the rejection probability and the deviation of Type I error rate.
Under the null hypothesis $H_0$, the actual Type I error of $t$-test is
\[
\alpha^t \;=\; \Pr(p_t  < \alpha) 
= \Pr\bigl(|T| >  z_{1-\alpha/2}\bigr),
\]
where $z_\beta=\Phi^{-1}(\beta)$ for the $\beta$ quantile of $N(0,1)$ with $\beta\in(0,1)$. 
and the deviation from the nominal level is measured by
\[
\Delta(\alpha) = \bigl|\alpha^t -\alpha\bigr|
= \left|\Pr(|T| > z_{1-\alpha/2})-\alpha\right|.
\]
While $\Delta(\alpha)$ summarizes the overall deviation of the Type I error from its nominal level $\alpha$, it does not reveal how this error is distributed between the two tails of the test. In practice, such information is essential because the direction of rejection carries distinct operational meaning in online experimentation.  
We therefore define the left- and right-tail rejection probabilities as
$$
\alpha_L^t = \Pr(p_t < \alpha, T<0) =  \Pr(T < z_{\alpha/2}),$$
$$
\alpha_R^t = \Pr(p_t < \alpha, T>0) = \Pr(T > z_{1-\alpha/2}).
$$
A reliable A/B testing procedure should allocate the total Type I error approximately evenly between the two tails, that is, $\alpha_L^t \approx \alpha_R^t \approx \alpha/2$. 
When the error allocation is unbalanced, the test becomes biased toward one direction of effect, leading to misleading conclusions. For example, when $\alpha_R^t$ is greater than $\alpha/2$, the test becomes more prone to incorrectly concluding that an ineffective treatment is beneficial. This bias can lead to the premature rollout of unproven strategies, resulting in unnecessary business costs. In large-scale online platforms, such asymmetry can also explain why A/A experiments sometimes exhibit uneven false positive rates across the two tails. These inconsistencies undermine the perceived reliability of the A/B testing system and may reduce confidence in its experimental results.

Therefore, in this paper, we aim to control both the total deviation of the Type I error and its directional balance between the two tails.
Given a predetermined tolerance $\epsilon > 0$, we say that the two-sample $t$-test achieves reliable inference if the deviations of the Type I error in both tails are within this tolerance, that is,
\[
\max\{\alpha_L^t - \alpha/2,\alpha_R^t - \alpha/2\} \le \epsilon.
\]
This condition ensures that the overall Type I error remains close to its nominal level $\alpha$ and that the error is allocated nearly symmetrically between the two tails.

\section{Methodology}

\subsection{Finite-Sample Error in A/B Testing}
In this section, we quantify the deviation of the Type~I error, both in total and for each tail separately.
The leading departures from normality are largely governed by the skewness and kurtosis of the underlying outcome distributions \citep{hall2013bootstrap}.
For the random variable $X_i$, the skewness and kurtosis are 
\begin{equation*}
\gamma_x= \sigma_x^{-3} E(X_i-\mu_x)^3,\quad
\tau_x=\sigma_y^{-4} E(X_i-\mu_x)^4,
\end{equation*}
with $\gamma_y,\tau_y$ defined analogously. 
Skewness characterizes the asymmetry of the data, whereas kurtosis describes the heaviness of its tails. For the standard normal distribution $N(0,1)$, the skewness is zero and the kurtosis is three. 

The following theorem presents the skewness and kurtosis of the mean difference $D = \bar{Y} - \bar{X}$, which determine the rate at which the test statistic $T$ approaches its asymptotic normal distribution.
\begin{theorem}
\label{thm:cumulants}
Suppose $X_1,\ldots,X_{n_x}$ are independent and identically distributed with $E|X_1|^4<\infty$, and $Y_1,\ldots,Y_{n_y}$ are independent and identically distributed with $E|Y_1|^4<\infty$.
Assume that the two samples ${X_i}$ and ${Y_j}$ are independent.
Then, for the mean difference $D = \bar{Y} - \bar{X}$, the skewness and kurtosis are given by
\begin{align}\label{equ:gammad}
    \gamma_D &=  \frac{(1+k)^{1/2}}{\sqrt{Nk}}\frac{\gamma_y \sigma_y^3 - k^2\gamma_x\sigma_x^3}{(k\sigma_x^2 + \sigma_y^2)^{3/2}},\\
    \tau_D &= 3 + \frac{1+k}{kN}\frac{(\tau_y-3)\sigma_y^4  + k^3(\tau_x-3)\sigma_x^4}{\bigl(\sigma_y^2+k\sigma_x^2\bigr)^2}.
\end{align}
\end{theorem}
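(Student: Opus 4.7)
\textbf{Proof plan for Theorem~\ref{thm:cumulants}.} The plan is to compute the second, third and fourth central moments of $D=\bar Y-\bar X$ directly from the independence of the two samples, express them through the single-variable moments of $X_1$ and $Y_1$, and then rewrite everything in terms of $(N,k)$, where $k=n_y/n_x$, so that $n_x=N/(1+k)$ and $n_y=Nk/(1+k)$. The formulas $\gamma_D=\mu_3(D)/\mathrm{Var}(D)^{3/2}$ and $\tau_D=\mu_4(D)/\mathrm{Var}(D)^2$ then give the stated expressions after routine algebra.

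First I would record $E(D)=\mu_y-\mu_x$ and, by independence,
\[
\mathrm{Var}(D)=\frac{\sigma_y^2}{n_y}+\frac{\sigma_x^2}{n_x}
=\frac{1+k}{Nk}\bigl(\sigma_y^2+k\sigma_x^2\bigr).
\]
Next I would compute $\mu_3(D)=E[(D-E D)^3]$. Writing $D-E D=(\bar Y-\mu_y)-(\bar X-\mu_x)$ and expanding by the binomial theorem, the cross products involve at least one factor of $E(\bar Y-\mu_y)=0$ or $E(\bar X-\mu_x)=0$, so only the two pure cubes survive, giving $\mu_3(D)=E(\bar Y-\mu_y)^3-E(\bar X-\mu_x)^3$. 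For an i.i.d.\ centered sum, all mixed terms of order three vanish, so $E(\bar Y-\mu_y)^3=n_y^{-2}\,E(Y_1-\mu_y)^3=\gamma_y\sigma_y^3/n_y^2$, and similarly for $X$. Plugging $n_y,n_x$ in terms of $(N,k)$ and dividing by $\mathrm{Var}(D)^{3/2}$ yields the claimed $\gamma_D$.

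For the kurtosis I would repeat the same binomial expansion to fourth order. Independence plus zero means again kills the cross terms of odd total order in either side, leaving
\[
\mu_4(D)=E(\bar Y-\mu_y)^4+6\,E(\bar Y-\mu_y)^2\,E(\bar X-\mu_x)^2+E(\bar X-\mu_x)^4.
\]
For an i.i.d.\ centered sum the standard combinatorial identity gives
$E(\bar Y-\mu_y)^4=n_y^{-3}E(Y_1-\mu_y)^4+3n_y^{-2}(1-n_y^{-1})\sigma_y^4=\tau_y\sigma_y^4/n_y^3+3(n_y-1)\sigma_y^4/n_y^3$, and symmetrically for $X$. Adding the three pieces and collecting, the terms multiplying $3$ assemble into $3(\sigma_y^2/n_y+\sigma_x^2/n_x)^2=3\,\mathrm{Var}(D)^2$, which produces the leading $3$, while the residual contributions reduce to $(\tau_y-3)\sigma_y^4/n_y^3+(\tau_x-3)\sigma_x^4/n_x^3$. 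Substituting the $(N,k)$ parametrization and dividing by $\mathrm{Var}(D)^2$ gives the claimed $\tau_D$.

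The only non-mechanical step is the moment identity for $E(\bar Y-\mu_y)^4$: one must track which of the $n_y^4$ index tuples $(i_1,i_2,i_3,i_4)$ survive after taking expectations of centered i.i.d.\ factors, leading to the $n_y$ diagonal tuples contributing $\tau_y\sigma_y^4$ and the $3n_y(n_y-1)$ pair-partition tuples contributing $\sigma_y^4$. Everything else is algebraic rearrangement, and the assumption $E|X_1|^4,E|Y_1|^4<\infty$ is exactly what guarantees that all expectations in sight are finite so the manipulations are rigorous.
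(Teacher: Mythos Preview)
Your proposal is correct and follows essentially the same route as the paper's own proof: both compute the third and fourth central moments of $D$ by expanding $(\bar Y-\mu_y)-(\bar X-\mu_x)$ binomially, use independence and zero means to kill the mixed terms, invoke the standard combinatorial identities for the third and fourth moments of an i.i.d.\ centered sum, and then collect the ``$3$'' terms into $3\,\mathrm{Var}(D)^2$ before dividing through. The only cosmetic difference is that you front-load the $(N,k)$ parametrization whereas the paper mostly works in $n_x,n_y$ and substitutes at the end.
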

This result shows how the asymmetry and tail behavior of the original data distributions propagate to the sampling distribution of the mean difference.
In particular, the skewness of the mean difference $\gamma_D$ vanishes when both groups are symmetric ($\gamma_x = \gamma_y = 0$), or when the experiment is equally allocated with identical skewness and variance ($k = 1$, $\gamma_x = \gamma_y$, $\sigma_x = \sigma_y$).
In general, $\gamma_D$ decreases at the rate $O(N^{-1/2})$ and $\tau_D -3$ decreases at the rate $O(N^{-1})$.
Let $i$ denote the imaginary unit, and let $\varphi_{X_1}(t)=E[e^{itX_1}]$ and $\varphi_{Y_1}(t)=E[e^{itY_1}]$) be the characteristic functions of $X_i$ and $Y_j$.  
To quantify the deviations of type I error precisely, we establish the following theorem.

\begin{theorem}
\label{thm:deviation}
Under the conditions of Theorem~\ref{thm:cumulants}, and additionally assuming the Cramér condition holds where 
$\limsup_{|t|\to\infty}|\varphi_{X_1}(t)|<1$ and $\limsup_{|t|\to\infty}|\varphi_{Y_1}(t)|<1$. 
As $n_x, n_y \rightarrow \infty$, we have 
\begin{align*}
\alpha^*_L - \alpha/2 &= \frac{\gamma_D}{6}\phi(z_{\alpha/2})(2z_{\alpha/2}^2+1) + O(N^{-1}),\\
\alpha^*_R - \alpha/2 &= -\frac{\gamma_D}{6}\phi(z_{\alpha/2})(2z_{\alpha/2}^2+1) + O(N^{-1}),
\end{align*}
and thus we have the total deviation is 
$$\Delta(\alpha)=O(N^{-1}).$$
\end{theorem}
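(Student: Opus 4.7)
The plan is to apply a one-term Edgeworth expansion to the Welch statistic $T$ under $H_0$ and then evaluate it at the two critical values $z_{\alpha/2}$ and $z_{1-\alpha/2}$. I would first observe that $T$ is a smooth function of the vector of sample means of the i.i.d.\ quadruple $(X_i, X_i^2, Y_i, Y_i^2)$, namely
$$T \;=\; H\bigl(\bar X,\overline{X^2},\bar Y,\overline{Y^2}\bigr),$$
where $H$ is analytic in a neighbourhood of $(\mu_x,E X_1^2,\mu_y,E Y_1^2)$. This places $T$ into the ``smooth function model'' of \citet{hall2013bootstrap}, for which a valid Edgeworth expansion exists provided the stated moment and Cramér conditions are in place.

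Next, using the finite fourth-moment hypothesis together with independence of the two samples (so that the joint characteristic function factorises), I would upgrade the marginal Cramér conditions to a joint Cramér-type condition on $(X_1,X_1^2,Y_1,Y_1^2)$ and apply the smooth-function Edgeworth machinery of \citet{hall2013bootstrap} to obtain
\begin{equation*}
\Pr(T \le x) \;=\; \Phi(x) \;+\; \phi(x)\,\tfrac{\gamma_D}{6}\,(2x^2+1) \;+\; O(N^{-1}),
\end{equation*}
uniformly in $x$. The only nontrivial algebra here is verifying that the third cumulant driving the leading correction is exactly $\gamma_D$ from Theorem~\ref{thm:cumulants}: the $O(N^{-1/2})$ contribution comes from the numerator $D$, while Taylor-expanding $1/\hat\sigma_D$ around $1/\sigma_D$ introduces correlated cubic terms that convert the non-Studentised Hermite polynomial $-(x^2-1)$ into the Studentised $(2x^2+1)$ while leaving the scalar coefficient $\gamma_D/6$ intact.

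With the expansion in hand, the remaining steps are bookkeeping. Setting $x=z_{\alpha/2}$ and subtracting $\Phi(z_{\alpha/2})=\alpha/2$ yields the stated formula for $\alpha^*_L-\alpha/2$. Setting $x=z_{1-\alpha/2}$, writing $\alpha^*_R=1-\Pr(T\le z_{1-\alpha/2})$, and using the symmetries $\phi(z_{1-\alpha/2})=\phi(z_{\alpha/2})$ and $z_{1-\alpha/2}^2=z_{\alpha/2}^2$, gives the mirror formula for $\alpha^*_R-\alpha/2$ with opposite sign. Summing the two tail deviations, the $O(N^{-1/2})$ leading terms cancel exactly, so $\alpha^t-\alpha=O(N^{-1})$ and therefore $\Delta(\alpha)=O(N^{-1})$.

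The main obstacle is the first step: establishing the Studentised two-sample Edgeworth expansion with an honest $O(N^{-1})$ remainder under unequal allocation. The marginal Cramér condition on $X_1$ does not automatically imply the Cramér-type condition on $(X_1,X_1^2)$ required by the smooth-function framework, so one typically needs a mild additional regularity hypothesis on the joint characteristic function of $(X_1,X_1^2)$ (and similarly for $Y_1$) in order to run the characteristic-function inversion argument. One must also check that the remainder is uniform in the allocation ratio $k$ as $n_x,n_y\to\infty$. Once the expansion is established, the evaluation at $z_{\alpha/2}$ and $z_{1-\alpha/2}$ and the cancellation giving $\Delta(\alpha)=O(N^{-1})$ are essentially algebraic.
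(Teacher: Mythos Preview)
Your approach matches the paper's: both invoke the Edgeworth expansion for the Studentised Welch statistic from \citet{hall2013bootstrap}, evaluate it at $z_{\alpha/2}$ and $z_{1-\alpha/2}$, and observe that the odd first-order term cancels in the two-sided deviation. Your one-term expansion with $O(N^{-1})$ remainder is in fact better matched to the fourth-moment hypothesis than the paper's two-term version (which tacitly needs more for $q_2$), and your caveat about upgrading marginal to joint Cram\'er conditions on $(X_1,X_1^2)$ is well placed---the paper simply cites Hall for the expansion without addressing it.
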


These results show that the left and right tails deviate in opposite directions at order $O(N^{-1/2})$, with the magnitude determined by $\gamma_D$.
The sign of $\gamma_D$ in equation~\eqref{equ:gammad} further indicates which tail exhibits inflation.
When $\gamma_D>0$, the left-tail probability satisfies $\alpha_L^t>\alpha/2$ while the right-tail probability satisfies $\alpha_R^t<\alpha/2$; when $\gamma_D<0$, the inequalities reverse.
This directional property is practically important in online experimentation, as it reveals whether the test is more prone to false approvals or false rejections under data skewness or allocation imbalance.
Moreover, since the two tail deviations have opposite signs, their first-order effects cancel, yielding an overall two-sided deviation of order $O(N^{-1})$.
This explains why the total Type~I error can look well controlled even when the test remains directionally biased. Hence, controlling only the total error is not sufficient; balanced tail behavior is also essential for valid and trustworthy inference at scale.

\subsection{Minimum Sample Size for Reliable $t$-test}

In this subsection we analyze the size error of the two sample $t$-test for A/B testing and derive explicit thresholds for the total sample size $N$ that guarantee reliable inference at a prespecified tail tolerance $\epsilon>0$. Let $\operatorname{sign}(x)=1$ if $x\ge0$ and $-1$ otherwise, and let $\min\{x,y\}$ and $\max\{x,y\}$ denote the smaller and larger of two real numbers $x$ and $y$, respectively.

\begin{theorem}
\label{thm:threshold}
Under the conditions of Theorem~\ref{thm:deviation}, let
\begin{equation*}
a_1 = \frac{1}{6}\,(2z_{\alpha/2}^2+1)\,\phi(z_{\alpha/2})
       \sqrt{\tfrac{1+k}{k}}\,
       \frac{\gamma_y \sigma_y^3 - k^2 \gamma_x \sigma_x^3}
            {(k\sigma_x^2 + \sigma_y^2)^{3/2}}.
\end{equation*}
For $a_1\neq 0$ and $\epsilon >0$,
(1) if the total sample size satisfies
\begin{equation}
\label{equ:Nmin1}
N\geq N_{\min}^{(1)} = (a_1/\epsilon)^2,
\end{equation}
then the maximal tail deviation is controlled at the level
\begin{equation*}
\max\{|\alpha_L^* - \alpha/2|,|\alpha_R^* - \alpha/2| \} \leq \epsilon + O(\epsilon^2).
\end{equation*}
\noindent (2)
Further assume that sixth moments exist, that is 
$E|X_1|^6<\infty$ and $E|Y_1|^6<\infty$. Let 
\begin{align*}
a_2 &= \phi(z_{\alpha/2})
\left\{
\frac{1+k}{12k}
\frac{(\tau_y-3)\sigma_y^4 + k^3(\tau_x-3)\sigma_x^4}
     {(\sigma_y^2+k\sigma_x^2)^2}
\,(z_{\alpha/2}^3 - 3z_{\alpha/2}) \right. \\[6pt]
&\quad - \frac{1+k}{18k}
\frac{(\gamma_y\sigma_y^3 - k^2\gamma_x\sigma_x^3)^2}
     {(k\sigma_x^2 + \sigma_y^2)^3}
\,(z_{\alpha/2}^5 + 2z_{\alpha/2}^3 - 3z_{\alpha/2}) \\[6pt]
&\quad \left. - \frac{1+k}{4}
\frac{(k^3\sigma_x^4+\sigma_y^4)(z_{\alpha/2}^3+3z_{\alpha/2})
      +2k(1+k)\sigma_x^2\sigma_y^2 z_{\alpha/2}}
     {k\,(k\sigma_x^2+\sigma_y^2)^2}
\right\}.
\end{align*}
If the total sample size satisfies
\begin{equation}
\label{equ:Nmin2}
N \geq N_{\min}^{(2)} = \left(
  \frac{|a_1| + \sqrt{\,a_1^2 - 4|a_2|\epsilon \operatorname{sign}\bigl(a_1^2 - 4|a_2|\epsilon\bigr)}}
       {2\epsilon}
  \right)^{2},
\end{equation}
then the maximal tail deviation is controlled at the sharper level
\[
\max\{|\alpha_L^* - \alpha/2|,|\alpha_L^* - \alpha/2|\} \leq \epsilon + O(\epsilon^3).
\]
\end{theorem}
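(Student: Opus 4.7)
The plan is to obtain both thresholds by expressing the per-tail deviation as an Edgeworth expansion in $N^{-1/2}$ and then inverting the resulting polynomial inequality for $N$.

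For Part (1) I would start from Theorem~\ref{thm:deviation} and substitute the closed form of $\gamma_D$ from equation~\eqref{equ:gammad} into its first-order terms. This gives
$$\alpha_L^{*}-\alpha/2 = \frac{a_1}{\sqrt{N}}+O(N^{-1}), \qquad \alpha_R^{*}-\alpha/2 = -\frac{a_1}{\sqrt{N}}+O(N^{-1}),$$
so the maximum absolute tail deviation equals $|a_1|/\sqrt{N}+O(N^{-1})$. Imposing $|a_1|/\sqrt{N}\le\epsilon$ yields $N\ge(a_1/\epsilon)^{2}=N_{\min}^{(1)}$, and once $N\asymp\epsilon^{-2}$ the remainder becomes $O(N^{-1})=O(\epsilon^{2})$, which delivers the claim of Part (1).

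For Part (2) I would push the Edgeworth expansion for the Welch statistic $T$ one order further; this is legitimate under the sixth-moment hypothesis, the Cram\'er condition being already supplied by Theorem~\ref{thm:deviation}. In its standard form the expansion reads
$$P(T\le x)=\Phi(x)+\phi(x)\Big[\tfrac{1}{\sqrt{N}}p_{1}(x)+\tfrac{1}{N}p_{2}(x)\Big]+O(N^{-3/2}),$$
with $p_{1}$ an even polynomial and $p_{2}$ odd. Evaluating at $x=\pm z_{1-\alpha/2}$ and using these parities produces
$$\alpha_L^{*}-\alpha/2 = \frac{a_1}{\sqrt{N}}+\frac{a_2}{N}+O(N^{-3/2}), \qquad \alpha_R^{*}-\alpha/2 = -\frac{a_1}{\sqrt{N}}+\frac{a_2}{N}+O(N^{-3/2}),$$
so the $O(1/N)$ contribution is common to both tails. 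The three pieces in the stated $a_2$ are exactly what $p_{2}(z_{1-\alpha/2})$ produces: (i) the kurtosis piece in $\tau_x-3,\tau_y-3$ coming from the Hermite-type factor $x^{3}-3x$, (ii) the squared-skewness piece carried by $x^{5}+2x^{3}-3x$, and (iii) the $\sigma_x^{2}\sigma_y^{2}$ piece that arises purely from the studentization of $D$ by $\hat S$ and is absent in a non-studentized expansion.

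With the two-term expansion in hand, the threshold is algebraic: $\max\{|\alpha_L^{*}-\alpha/2|,|\alpha_R^{*}-\alpha/2|\}=|a_1|/\sqrt{N}+|a_2|/N+O(N^{-3/2})$, so the requirement reduces to the quadratic inequality $|a_2|u^{2}+|a_1|u-\epsilon\le 0$ in $u=1/\sqrt{N}$, whose positive root, after rationalization, reproduces $N_{\min}^{(2)}$. Because $N\asymp\epsilon^{-2}$ at this threshold, the $O(N^{-3/2})$ remainder is $O(\epsilon^{3})$, which delivers the sharper tolerance. The main obstacle is the derivation of $a_2$ itself: since $T$ is a ratio rather than a standardized sum, the classical Edgeworth formulas must be combined with a delta-method expansion of $1/\hat S$ around $1/S$ and careful bookkeeping of the joint cumulants of $D$ with $\hat\sigma_x^{2}/n_x+\hat\sigma_y^{2}/n_y$. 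This is exactly what produces the $\sigma_x^{2}\sigma_y^{2}$ cross-term in the third piece of $a_2$ and is the step that genuinely uses the sixth-moment assumption.
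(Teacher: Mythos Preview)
Your Part~(1) is exactly the paper's argument: both read off the leading Edgeworth term from Theorem~\ref{thm:deviation}, identify the per-tail deviation as $|a_1|/\sqrt{N}+O(N^{-1})$, and invert to get $N_{\min}^{(1)}=(a_1/\epsilon)^2$ with remainder $O(\epsilon^2)$.

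For Part~(2) you and the paper take the same route up to the final inversion: push the Edgeworth expansion to second order under the sixth-moment assumption, obtain $\alpha_L^*-\alpha/2=a_1/\sqrt N+a_2/N+O(N^{-3/2})$ and $\alpha_R^*-\alpha/2=-a_1/\sqrt N+a_2/N+O(N^{-3/2})$, then solve for $N$. The divergence is in the last step. The paper keeps the pair $|a_2u^2+a_1u|\le\epsilon$ and $|a_2u^2-a_1u|\le\epsilon$ separate and performs a case split on the sign of $a_1^2-4|a_2|\epsilon$, which is precisely what generates the two-branch formula~\eqref{equ:Nmin2}. Your reduction via $\max(|p+q|,|p-q|)=|p|+|q|$ collapses the pair to the single quadratic $|a_2|u^2+|a_1|u\le\epsilon$, whose positive root, after rationalization, gives
\[
N\;\ge\;\Bigl(\tfrac{|a_1|+\sqrt{a_1^2+4|a_2|\epsilon}}{2\epsilon}\Bigr)^{2}.
\]
This coincides with the stated $N_{\min}^{(2)}$ only on the branch $a_1^2<4|a_2|\epsilon$; on the branch $a_1^2\ge4|a_2|\epsilon$ the theorem places $-4|a_2|\epsilon$ under the radical, yielding a strictly smaller threshold than yours. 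Hence your assertion that the root ``reproduces $N_{\min}^{(2)}$'' is not correct as written: your argument recovers one branch of~\eqref{equ:Nmin2} but not the other, so it does not establish the theorem as stated in the regime $a_1^2\ge4|a_2|\epsilon$. To match the paper you would need to drop the triangle-identity shortcut and carry out the paper's case analysis on the two inequalities separately.
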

This theorem provides a closed-form criterion for determining 
the minimum samples size required for the $t$-test to achieve reliable inference.  In particular, $N_{\min}^{(1)}$ provides a simple first-order benchmark driven only by skewness, while $N_{\min}^{(2)}$ refines this benchmark by incorporating kurtosis and yielding sharper control. 
In practice, the values of skewness and kurtosis are unknown. One approach is to substitute the sample estimates of skewness and kurtosis directly into the formulas for $N_{\min}^{(1)}$ and $N_{\min}^{(2)}$. Alternatively, in online A/B testing platforms, historical experimental data can be leveraged to construct empirical priors for these cumulants, which can then be used to evaluate the thresholds more robustly.

Empirically we find that in most cases $N_{\min}^{(2)}$ is smaller than $N_{\min}^{(1)}$. This primarily reflects the fact that $a_1^2 \geq 4|a_2|\epsilon$ holds for small $\epsilon$ in many practical situations. As a result, the second order refinement typically reduces the required sample size, leading to more efficient testing without sacrificing error control.
Furthermore, for many online metrics, such as the number of diamonds received and the number of fans following a streamer, the skewness is extremely large, and the minimum required sample size can reach hundreds of millions.

    

\subsection{Edgeworth-Based Correction for Small Samples}
 
In many practical scenarios, however, the available sample size may fall below the theoretical thresholds established in Theorem~\ref{thm:threshold}.
Under such conditions, the standard $t$-test may no longer maintain the nominal Type~I error level, leading to unreliable or biased decisions.
To mitigate this issue, we introduce an Edgeworth-based correction for the $p$-value that enables valid hypothesis testing even in small-sample settings.

Let $G(x)$ denote the cumulative distribution function of the statistic $T$.
Following the classical Edgeworth expansion \citep{hall2013bootstrap}, we have
\begin{equation*}
    \label{equ:Gx}
G(x) \;=\; \Phi(x) \;+\; \phi(x)\{q_1(x)+q_2(x)\} \;+\; O(N^{-3/2}).
\end{equation*}
In our setting, the first- and second-order polynomials are
\begin{align}
\label{equ:q1q2}
q_1(x) &= \frac{\gamma_D}{6}\,(2x^2+1), \\[4pt]
q_2(x)
&= \frac{\tau_D - 3}{12}\,(x^3 - 3x)
   - \frac{\gamma_D^2}{18}\,(x^5 + 2x^3 - 3x) \notag \\
&\quad - \frac{1 + k}{4N}\,
\frac{(k^3\sigma_x^4 + \sigma_y^4)(x^3 + 3x) + 2k(1 + k)\sigma_x^2\sigma_y^2\,x}
     {k\,(k\sigma_x^2 + \sigma_y^2)^2}.
\end{align}
These expressions depend on the unknown skewness and kurtosis terms $\gamma_D$ and $\tau_D$.
Building upon this theoretical expansion, we introduce a plug-in correction for the $p$-value through the following steps.

First, we estimate sample skewness and kurtosis from the data,
$$\hat{\gamma}_x = (n_x\hat\sigma_x^{3})^{-1}\sum_{i=1}^{n_x} (X_i - \bar X)^3,\quad   
\hat{\tau}_x = (n_x\hat\sigma_x^{4})^{-1}\sum_{i=1}^{n_x} (X_i - \bar X)^4 $$
and obtain $\hat{\gamma}_y$ and $\hat{\tau}_y$ analogously. 
Substituting $\hat{\gamma}_x,\hat{\gamma}_y,\hat{\tau}_x,\hat{\tau}_y$ into the expressions for $\gamma_D$ and $\tau_D$ in Eq.~\eqref{equ:gammad} yields $\hat{\gamma}_D$ and $\hat{\tau}_D$.

Second, replacing $\gamma_D$ and $\tau_D$ with $\hat{\gamma}_D$ and $\hat{\tau}_D$ in Eq.~\eqref{equ:q1q2} provides the estimated correction terms $\hat{q}_1(x), \hat{q}_2(x)$, leading to the plug-in Edgeworth cumulative distribution function
\[
\hat G(x) = \Phi(x) + \phi(x)\{\hat q_1(x) + \hat q_2(x)\}.
\]
For numerical stability, we truncate $\hat G(x)$ to the unit interval as
$$
\hat G_{\mathrm{tr}}(x)
=\min\{\max\{\hat{G}(x), 0\}, 1\}.
$$
This truncation ensures that the estimated cumulative distribution function remains within valid probability bounds without affecting asymptotic properties.
The Edgeworth-corrected two-sided $p$-value is then computed as
\begin{equation}
    \label{equ:pc}
p_{c} = 2\min\{\hat G_{\mathrm{tr}}(T),1-\hat G_{\mathrm{tr}}(T)\}.
\end{equation}
In the final step, we perform hypothesis testing based on the corrected $p$-value.
The null hypothesis $H_0$ is rejected whenever $p_c < \alpha$.
When $T < 0$, the rejection occurs in the left tail, indicating that the control group outperforms the treatment.
Conversely, when $T > 0$, the rejection occurs in the right tail, suggesting that the treatment mean $\mu_y$ exceeds the control mean $\mu_x$.

By applying the corrected $p$-value, the test achieves more balanced and accurate control of Type~I error in both tails, even when the sample size is limited.

\section{Experiment}

\subsection{Synthetic Data Experiment}

\begin{table*}[t]
\centering
\caption{Deviation of Type I error rate at each tail and standard errors of the $t$-test ($p_t$) and Edgeworth-corrected test ($p_c$) across different sample sizes for \textit{lognormal} $LN(0,1)$ data with $k=5$, $\alpha = 0.05$ and $\epsilon = 0.01$ based on $B=10^4$ repeated experiments. }
\label{tab:tail_bias_se_synthlog1}
\setlength{\tabcolsep}{10pt}\renewcommand{\arraystretch}{1.25}
\begin{tabular}{l|cc|cc}
\toprule
\multirow{2}{*}{\textbf{Sample size $N$}}
  & \multicolumn{2}{|c|}{\textbf{The $t$-test}}
  & \multicolumn{2}{c}{\textbf{EE-corrected test}}\\
\cmidrule(lr){2-5}
   & $(\hat{\alpha}_L^*-\alpha/2)\ $
   & $(\hat{\alpha}_R^*-\alpha/2)\ $
   & $(\hat{\alpha}_L^*-\alpha/2)\ $
   & $(\hat{\alpha}_R^*-\alpha/2)\ $ \\
\midrule
1,500  & $-0.0131 \pm 0.0011$ & $0.0252 \pm 0.0022$ & $-0.0004 \pm 0.0015$ & $0.0142 \pm 0.0019$ \\
2,376  & $-0.0113 \pm 0.0012$ & $0.0171 \pm 0.0020$ & $0.0023 \pm 0.0016$  & $0.0083 \pm 0.0018$ \\
3,774  & $-0.0108 \pm 0.0012$ & $0.0135 \pm 0.0019$ & $-0.0005 \pm 0.0015$ & $0.0047 \pm 0.0017$ \\
\ulnum{5,988}  & \ulnum{$-0.0083 \pm 0.0013$} & \ulnum{$0.0091 \pm 0.0018$} & $0.0010 \pm 0.0016$  & $0.0010 \pm 0.0016$ \\
9,504  & $-0.0054 \pm 0.0014$ & $0.0106 \pm 0.0019$ & $0.0026 \pm 0.0016$  & $0.0044 \pm 0.0017$ \\
15,090 & $-0.0049 \pm 0.0014$ & $0.0063 \pm 0.0017$ & $0.0032 \pm 0.0017$  & $0.0012 \pm 0.0016$ \\
23,952 & $-0.0057 \pm 0.0014$ & $0.0076 \pm 0.0018$ & $0.0004 \pm 0.0016$  & $0.0036 \pm 0.0017$ \\
\bottomrule
\end{tabular}
\end{table*}

We begin by evaluating the finite-sample performance of Eqs.~\eqref{equ:Nmin1}, \eqref{equ:Nmin2} and the Edgeworth correction \eqref{equ:pc} through synthetic experiments. 
To this end, we generate samples $X_i$ and $Y_j$ independently from the lognormal distribution $LN(0,1)$, whose logarithm is standard normal.
This design ensures that the null hypothesis $H_0:\mu_x=\mu_y$ holds exactly, thereby providing a clean baseline for quantifying Type~I error behavior.  
Throughout the experiments, we fix the error tolerance at $\epsilon=0.01$. To study the impact of unequal allocation, we set the sample size ratio to $k=5$.
Substituting these values into \eqref{equ:Nmin1} and \eqref{equ:Nmin2} yields theoretical minimum sample size requirements of $N_{\min}^{(1)}=8712$ and $N_{\min}^{(2)}=5986$. These quantities serve as analytic benchmarks against which empirical performance can be assessed. 

To investigate how well these thresholds capture finite-sample behavior, we consider total sample sizes $N$ ranging from $N_{\min}^{(2)}/5$ to $5N_{\min}^{(2)}$,  selecting seven values evenly spaced on the logarithmic scale to cover both under- and over-powered regimes. For each $N$, we conduct $B=10^4$ independent replications. In replication $i=1,\cdots,B$, we compute the two sample $t$ statistic $T_i$ by Eq.~\eqref{equ:tstas} and the corresponding two-sided $p$-values for (i) the classical $t$-test, $p_{t,i}$ and (ii)  the Edgeworth-corrected $p_{c,i}$ by Eq.~\eqref{equ:pc} respectively. Let $I(\cdot)$ denote the indicator function. The empirical estimates of the left- and right-tail Type~I error are
$$\hat{\alpha}_{L}^{a} = B^{-1}\sum_{i=1}^B I(p_{a,i} < \alpha, T_i < 0), \quad \hat{\alpha}_R^{a} = B^{-1}\sum_{i=1}^B I(p_{a,i} < \alpha, T_i > 0),$$
where $a\in \{t,c\}$ indexes the $t$-test and the Edgeworth-corrected test, respectively. Deviations from the nominal per-tail level are 
$$\hat{\Delta}_L = |\hat{\alpha}_L^{a} - \alpha/2|,\qquad \hat{\Delta}_R = |\hat{\alpha}_R^{a} - \alpha/2|.$$

Table~\ref{tab:tail_bias_se_synthlog1} summarizes the findings. As expected, the tail deviations $\hat{\Delta}_L, \hat{\Delta}_R$ decrease as $N$ increases for both methods. Once $N \geq N_{\min}^{(2)}$, the deviations fall within the tolerance level $\epsilon$. When $N$ further exceeds the larger threshold $N_{\min}^{(1)}$, the deviations are also well controlled. These results confirm our theoretical analysis and illustrate that by incorporating higher-order information, kurtosis, of the data, one can compute a more precise minimum sample size $N_{\min}^{(2)}$. In contrast, relying only on skewness yields a valid but more conservative bound $N_{\min}^{(1)}$, which may require substantially larger samples. 
Notably, the corrected $p$-value $p_c$ achieves the tolerance at smaller $N$, highlighting its faster convergence to the nominal Type~I error rate compared to the two sample $t$-test.


\subsection{Real Data Experiment}

We next assess the proposed methods using real-world data from a large-scale online experimentation platform. Two representative kinds of data are considered, each exhibiting strong skewness and practical importance for platform operations. For both cases we set $\epsilon=0.01$ and apply two-sided tests with nominal significance level $\alpha=0.05$.

The first type of output data, \emph{publish count}, records the number of videos uploaded by users. This measure is central to platform engagement because a steady stream of newly published content sustains user interest and stimulates downstream outcomes such as views, shares, and interactions.
In practice, the distribution of publish count departs strongly from normality. A small fraction of highly active creators, typically only five to ten percent of the user base, contributes sixty to seventy percent of the total number of videos. This concentration produces substantial skewness and heavy tails, with sample estimates $\gamma_x = \gamma_y = 14.94$ and $\tau_x = \tau_y = 490.7$. Since the data have equal variances, these terms cancel out in Eqs.~\eqref{equ:Nmin1} and \eqref{equ:Nmin2}.
With a group size ratio of $k=5$, our theory provides two thresholds for the minimum required sample size. The first-order bound $N_{\min}^{(1)}$, which depends only on skewness, yields $N_{\min}^{(1)} = 51{,}094$. Incorporating both skewness and kurtosis gives the refined second-order bound $N_{\min}^{(2)} = 35{,}042$.
For this real-data experiment, we report the exact deviations of the empirical Type~I errors in each tail across different sample sizes, as summarized in Table~\ref{tab:tail_bias_se_pub}. The deviation of the classical $t$-test gradually decreases as $N$ increases and meets the tolerance threshold once $N$ approaches the theoretical bound $N_{\min}^{(2)} = 35{,}042$. 
In contrast, the Edgeworth-corrected $p$-value $p_c$ maintains $\hat{\Delta} \leq \epsilon$ consistently, even for substantially smaller sample sizes. This demonstrates the robustness of the introduced correction, which achieves accurate error control in regions where the standard $t$-test still exhibits notable bias.

\begin{table*}[t]
\centering
\caption{Deviation of Type I error rate at each tail and standard errors of $t$-test ($p_t$) and Edgeworth-corrected test ($p_c$) across different sample sizes for \textit{publish count} data with $k=5$, $\alpha = 0.05$ and $\epsilon = 0.01$ based on $B=10^4$ repeated experiments.}
\label{tab:tail_bias_se_pub}
\setlength{\tabcolsep}{10pt}\renewcommand{\arraystretch}{1.25}
\begin{tabular}{l|cc|cc}
\toprule
\multirow{2}{*}{\textbf{Sample size $N$}}
  & \multicolumn{2}{|c|}{\textbf{The $t$-test}}
  & \multicolumn{2}{c}{\textbf{EE-corrected test}}\\
\cmidrule(lr){2-5}
   & $(\hat{\alpha}_L^t-\alpha/2)\ $
   & $(\hat{\alpha}_R^t-\alpha/2)\ $
   & $(\hat{\alpha}_L^c-\alpha/2)\ $
   & $(\hat{\alpha}_R^c-\alpha/2)\ $ \\
\midrule
9,000   & $-0.0127 \pm 0.0011$ & $0.0245 \pm 0.0022$ & $-0.0003 \pm 0.0016$ & $0.0158 \pm 0.0020$ \\
12,000  & $-0.0121 \pm 0.0011$ & $0.0215 \pm 0.0021$ & $-0.0016 \pm 0.0015$ & $0.0127 \pm 0.0019$ \\
18,000  & $-0.0131 \pm 0.0011$ & $0.0173 \pm 0.0020$ & $0.0002 \pm 0.0016$  & $0.0092 \pm 0.0018$ \\
24,000  & $-0.0097 \pm 0.0012$ & $0.0125 \pm 0.0019$ & $0.0019 \pm 0.0016$  & $0.0054 \pm 0.0017$ \\
\ulnum{35,100}  & $\ulnum{-0.0079 \pm 0.0013}$ & $\ulnum{0.0103 \pm 0.0018}$ & $0.0046 \pm 0.0017$  & $0.0030 \pm 0.0016$ \\
48,000  & $-0.0063 \pm 0.0013$ & $0.0104 \pm 0.0018$ & $0.0052 \pm 0.0017$  & $0.0028 \pm 0.0016$ \\
72,000  & $-0.0057 \pm 0.0014$ & $0.0058 \pm 0.0017$ & $0.0032 \pm 0.0016$  & $-0.0013 \pm 0.0015$ \\
96,000  & $-0.0062 \pm 0.0014$ & $0.0056 \pm 0.0017$ & $0.0013 \pm 0.0016$  & $0.0007 \pm 0.0016$ \\
\bottomrule
\end{tabular}
\end{table*}


The second type of output data, \emph{live duration}, captures the length of live streaming sessions conducted by users. This variable plays a key role in platform dynamics, as the total amount of broadcasting shapes user engagement, advertising revenue, and the vitality of the online community. Longer sessions not only create more opportunities for interactions such as comments, likes, and virtual gifts but also help sustain audience participation throughout the live event.
Empirically, the distribution of live duration departs markedly from normality. On major social media platforms with integrated short-video and live-streaming services, most ordinary users post only videos and rarely host live sessions, so their live duration is recorded as zero.
In contrast, a much smaller group of professional hosts depend on live streaming as their main activity and often conduct sessions that last for many hours. This sharp contrast produces substantial skewness and heavy tails, with sample estimates $\gamma_x = \gamma_y = 5.09$ and $\tau_x = \tau_y = 41.9$.
For this experiment we set the allocation ratio to $k=10$. The theoretical framework yields two benchmarks for the minimum required sample size. The first order bound is $N_{\min}^{(1)} = 15{,}022$, while the refined second order bound is $N_{\min}^{(2)} = 9{,}361$.

As shown in Table~\ref{tab:tail_bias_se_live}, the empirical results for live duration exhibit the same qualitative pattern as those for publish count.
Once the total sample size reaches the theoretical threshold $N_{\min}^{(2)}$, the Type~I error of the $t$-test aligns with the nominal level. 
In the smaller-sample regime, however, the $t$-test still exhibits noticeable deviation in Type~I error, especially in the right tail. By contrast, the Edgeworth-corrected $p$-value $p_c$ effectively stabilizes the error across all sample sizes. Even for $N$ well below the theoretical threshold, the deviations $|\hat{\alpha}_L^c - \alpha/2|$ and $|\hat{\alpha}_R^c - \alpha/2|$ remain within the tolerance $\epsilon = 0.01$.
These numerical results confirm that the Edgeworth correction generalizes beyond a single data type and remains reliable in highly skewed and unbalanced cases such as live streaming duration.

\begin{table*}[t]
\centering 
\caption{Deviation of Type I error rate at each tail and standard errors of $t$-test ($p_t$) and Edgeworth-corrected test ($p_c$) across different sample sizes for \textit{live duration} data with $k=10$, $\alpha = 0.05$ and $\epsilon = 0.01$ based on $B=10^4$ repeated experiments.}
\label{tab:tail_bias_se_live}
\setlength{\tabcolsep}{10pt}\renewcommand{\arraystretch}{1.25}
\begin{tabular}{l|cc|cc}
\toprule
\multirow{2}{*}{\textbf{Sample size $N$}}
  & \multicolumn{2}{|c|}{\textbf{The $t$-test}}
  & \multicolumn{2}{c}{\textbf{EE-corrected test}}\\
\cmidrule(lr){2-5}
   & $(\hat{\alpha}_L^t-\alpha/2)\ $
   & $(\hat{\alpha}_R^t-\alpha/2)\ $
   & $(\hat{\alpha}_L^c-\alpha/2)\ $
   & $(\hat{\alpha}_R^c-\alpha/2)\ $ \\
\midrule
2,200   & $-0.0151 \pm 0.0010$ & $0.0293 \pm 0.0023$ & $-0.0033 \pm 0.0015$  & $0.0102 \pm 0.0018$ \\
3,300   & $-0.0146 \pm 0.0010$ & $0.0242 \pm 0.0022$ & $-0.0019 \pm 0.0015$  & $0.0068 \pm 0.0018$ \\
4,950   & $-0.0102 \pm 0.0012$ & $0.0164 \pm 0.0020$ & $0.0005 \pm 0.0016$   & $0.0019 \pm 0.0016$ \\
6,600   & $-0.0107 \pm 0.0012$ & $0.0156 \pm 0.0020$ & $0.0004 \pm 0.0016$   & $0.0000 \pm 0.0016$ \\
\ulnum{9,350}   & $\ulnum{-0.0080 \pm 0.0013}$ & $\ulnum{0.0094 \pm 0.0018}$ & $0.0033 \pm 0.0017$   & $-0.0030 \pm 0.0015$ \\
13,200  & $-0.0082 \pm 0.0013$ & $0.0074 \pm 0.0018$ & $0.0014 \pm 0.0016$   & $-0.0018 \pm 0.0015$ \\
18,700  & $-0.0049 \pm 0.0014$ & $0.0049 \pm 0.0017$ & $0.0029 \pm 0.0016$   & $-0.0033 \pm 0.0015$ \\
27,500  & $-0.0054 \pm 0.0014$ & $0.0033 \pm 0.0017$ & $0.0013 \pm 0.0016$   & $-0.0039 \pm 0.0014$ \\
\bottomrule
\end{tabular}
\end{table*}



Across both data types, the empirical findings reinforce our theoretical results. The minimum required sample sizes derived from theory closely match those observed in practice, confirming the accuracy of the second order bound $N_{\min}^{(2)}$. In most cases $N_{\min}^{(2)}$ is not only more precise but also smaller than the first order bound $N_{\min}^{(1)}$, providing a sharper threshold that reduces unnecessary conservatism. For large scale A/B testing platforms, even a modest reduction in sample size can translate into substantial savings in experimental resources. Empirical estimates suggest that applying $N_{\min}^{(2)}$ in practice could save on the order of one hundred million dollars in experimentation costs, underscoring the substantial practical value of our methodology for large-scale online platforms.
At the same time, in small sample scenarios where higher order moment estimates such as kurtosis may be unstable, the first order bound $N_{\min}^{(1)}$ remains valuable as a conservative guideline. For this reason we retain both thresholds, so that practitioners can flexibly choose between a more precise or a more robust benchmark depending on the data available.

\subsection{In-Depth Analysis}

In this section, we examine the robustness of our method under varying settings of the tolerance level $\epsilon$ and the allocation ratio $k$.

First, we examine how the minimal required sample sizes, $N_{\min}^{(1)}$ and $N_{\min}^{(2)}$, vary with the tolerance level $\epsilon$. We employ the two real-world datasets introduced in the previous section, namely the publish count and live duration datasets. The significance level is fixed at $\alpha = 0.1$.
We adopt $\alpha = 0.1$ instead of $\alpha = 0.05$ to ensure a feasible range of $\epsilon$. When $\alpha$ is smaller, the admissible values of $\epsilon$ become very limited and may fall below the Monte Carlo error (approximately $0.003$ under $10{,}000$ replications), leading to unstable numerical estimates. Hence, $\alpha = 0.1$ offers a more reliable setting for investigating the sensitivity of $N_{\min}$ with respect to $\epsilon$.

The error tolerance $\epsilon$ is varied over $\{0.04, 0.03, 0.02, 0.01\}$, while all other parameters remain identical to those used in the preceding experiments. For each $\epsilon$, our theoretical formulas Eqs.~\eqref{equ:Nmin1} and \eqref{equ:Nmin2} yield the corresponding $N_{\min}^{(1)}$ and $N_{\min}^{(2)}$. To empirically validate these results, we perform $10^4$ Monte Carlo replications and identify the smallest $N$ satisfying
$$\max\{|\hat \alpha^*_R-\alpha/2|,\,|\hat \alpha^*_L-\alpha/2|\}\leq \epsilon.$$

As shown in Figure~\ref{fig:results-abl}, both $N_{\min}^{(1)}$, $N_{\min}^{(2)}$, and the empirically estimated minimum sample size increase as $\epsilon$ decreases. Although the theoretical values do not perfectly coincide with the empirical estimates, they remain sufficiently close to provide accurate guidance on the required sample size for achieving the desired Type-I error control in general A/B testing scenarios. This consistency demonstrates the robustness of our theoretical derivation with respect to $\epsilon$ and highlights its practical reliability for real-world experimentation.

\begin{figure}[t]
    \includegraphics[width=0.49\linewidth]{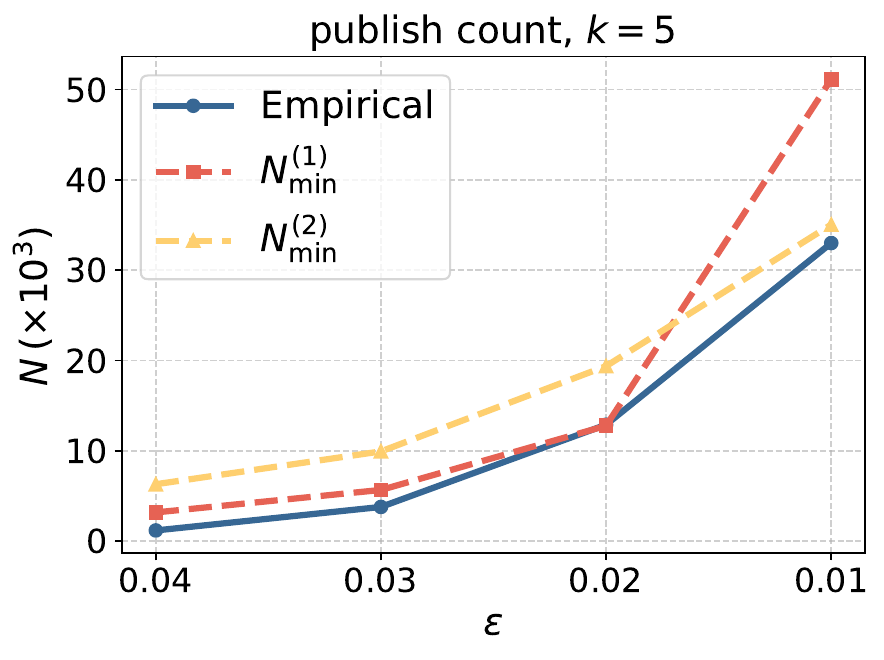}
    \hfill
    \includegraphics[width=0.49\linewidth]{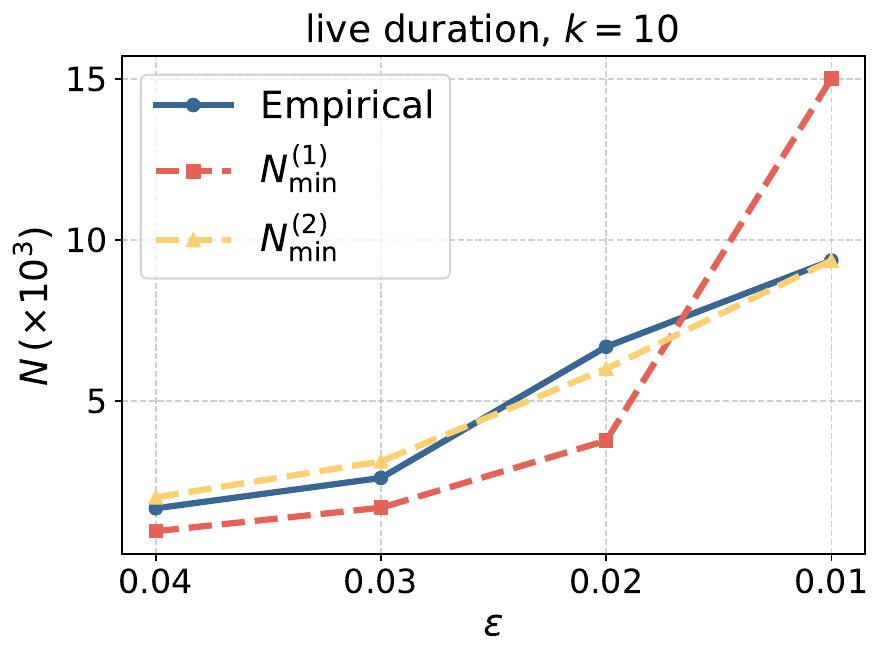}
    \caption{Empirical vs. theoretical minimal sample sizes $N_{\min}^{(1)}$ and $N_{\min}^{(2)}$ under $\alpha=0.1$ across varying tolerance levels $\epsilon$ for (left) publish count and (right) live duration datasets.}
    \label{fig:results-abl}
\end{figure}


\begin{figure}[t]
  \centering
  \begin{subfigure}[t]{0.49\linewidth}
    \includegraphics[width=\linewidth]{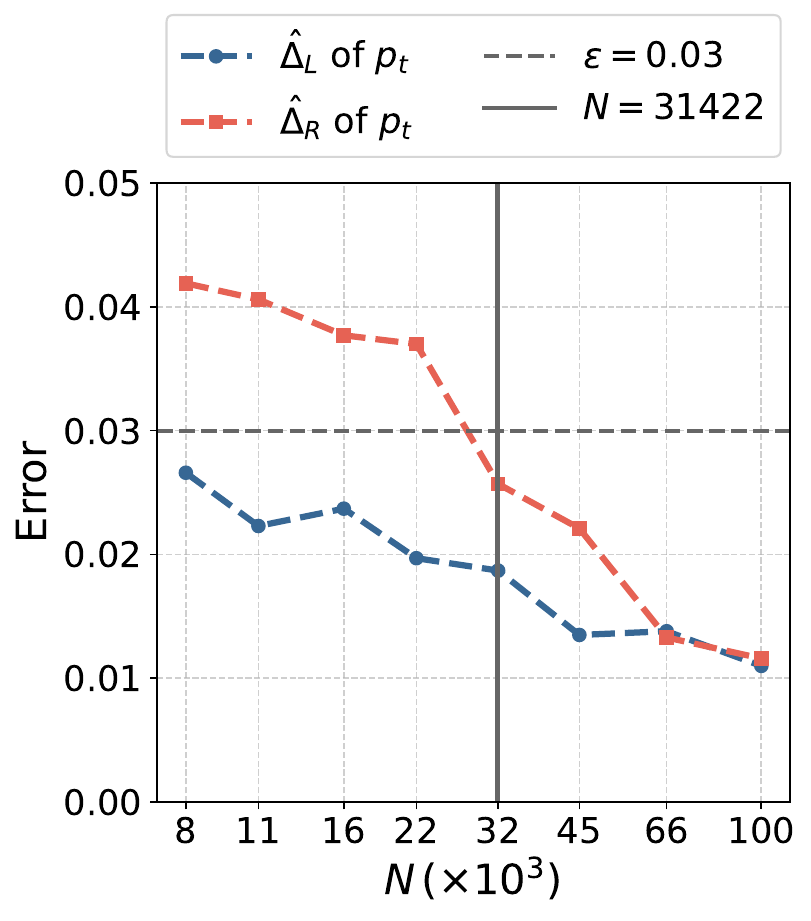}
    \caption{Publish count, $k=9$ for $p_t$.}
  \end{subfigure}\hfill
  \begin{subfigure}[t]{0.49\linewidth}
    \includegraphics[width=\linewidth]{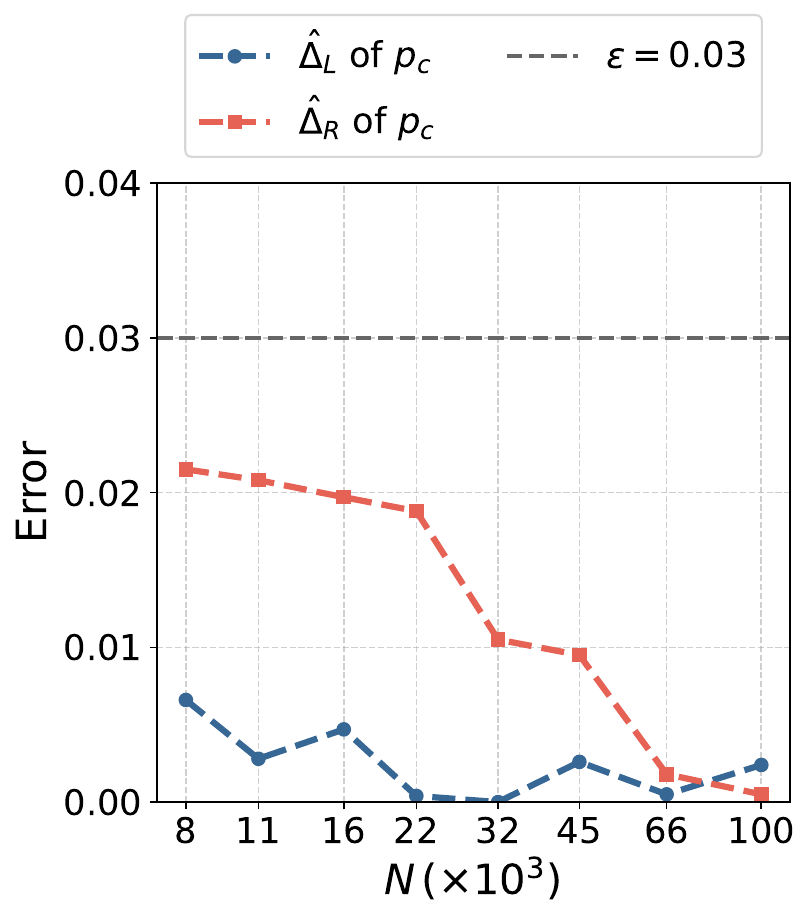}
    \caption{Publish count, $k=9$ for $p_c$.}
  \end{subfigure}

  \vspace{0.4em}
  \begin{subfigure}[t]{0.49\linewidth}
    \includegraphics[width=\linewidth]{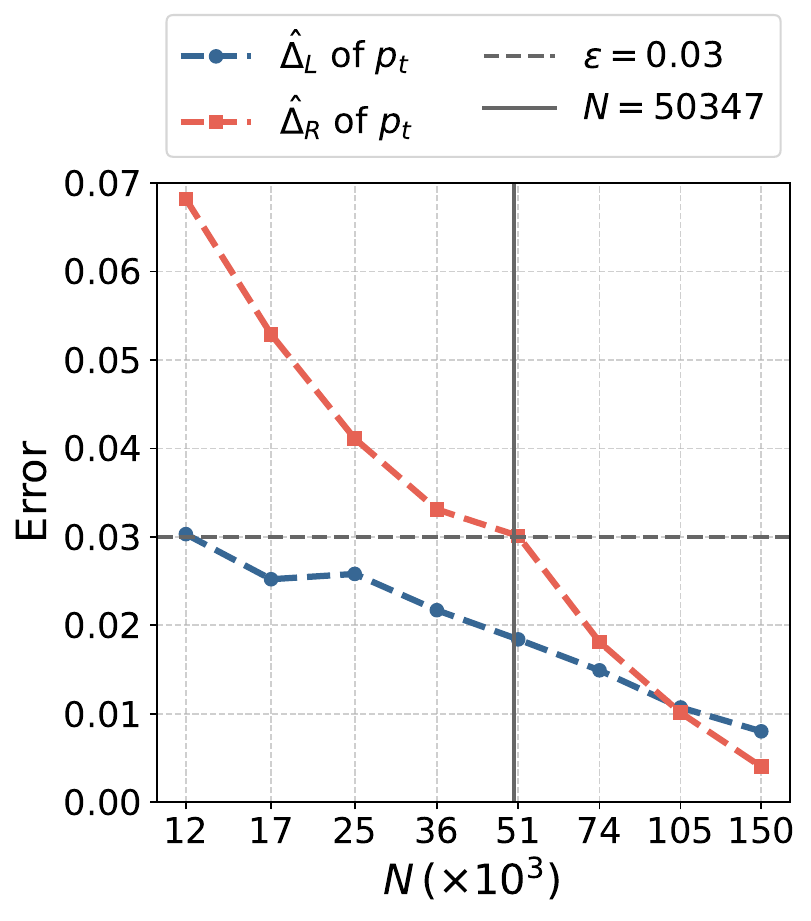}
    \caption{Live duration, $k=99$ for $p_t$.}
  \end{subfigure}\hfill
  \begin{subfigure}[t]{0.49\linewidth}
    \includegraphics[width=\linewidth]{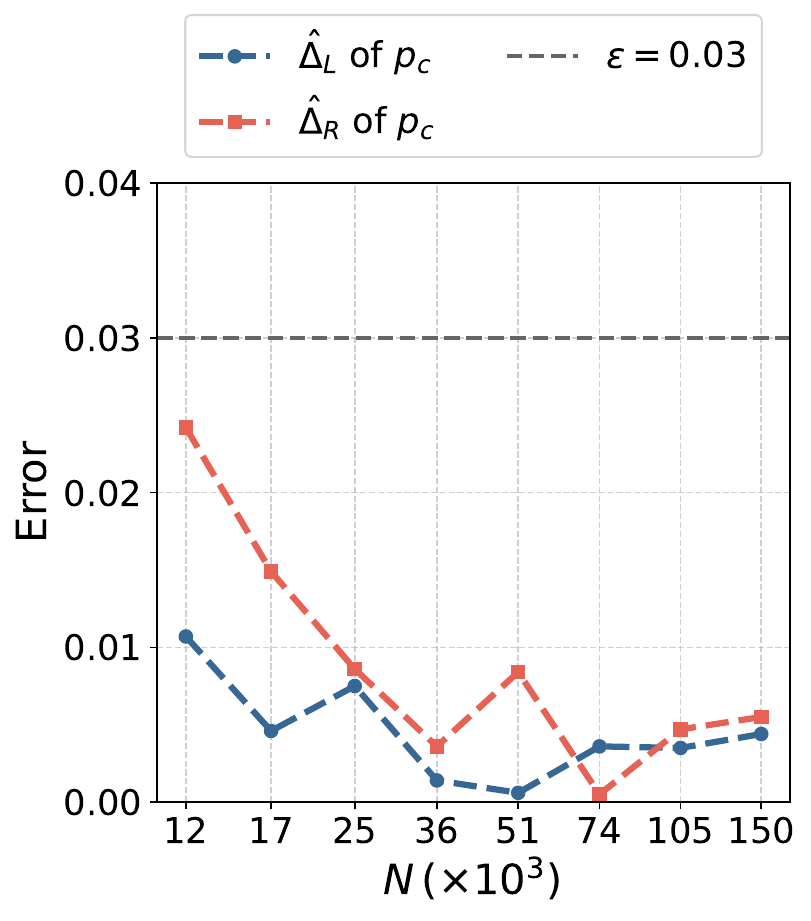}
    \caption{Live duration, $k=99$ for $p_c$.}
  \end{subfigure}

  \caption{In-depth comparison at $\alpha=0.10$ and $\epsilon=0.03$ for two metrics (publish count, live duration) under two statistics (Welch $t$ vs.\ Edgeworth-corrected $p_c$).}
  \label{fig:results-n-tail}
\end{figure}

Secondly, we investigate the effect of varying the significance level $\alpha$ and the allocation ratio $k$. In practice, online platforms often adopt a relatively large $\alpha$ to increase sensitivity in detecting potentially beneficial strategies. Here, we set $\alpha = 0.1$, and since a larger significance level allows for a higher tolerance, we take $\epsilon = 0.03$.
Moreover, we examine highly unbalanced allocation scenarios. Specifically, we take $k = 9$ for the publish count dataset and $k = 99$ for the live duration dataset. Such extreme ratios commonly arise in the early stages of experimentation, where a small portion of users is assigned to a new strategy for preliminary evaluation. In real-world A/B testing platforms, these unbalanced settings are often constrained by business or policy considerations.

For the publish count case with $k = 9$, our theoretical result in Eq.~\eqref{equ:Nmin2} gives a minimum required sample size of $N_{\min}^{(2)} = 31{,}422$. As shown in Figure~\ref{fig:results-n-tail}, when $N \geq 31{,}500$, the empirical tail errors fall below the threshold $\epsilon = 0.03$, which is very close to our theoretical prediction.
Similarly, for the live duration case with $k = 99$, representing a highly extreme allocation, the empirical minimum sample size reported in Figure~\ref{fig:results-n-tail} is $N = 51{,}300$ to satisfy the same error criterion, while our theoretical estimate gives $N_{\min}^{(2)} = 50{,}347$. The close agreement between theory and simulation again confirms the reliability of our analytical formulation under extreme allocation scenarios.

\section{Conclusion}

This work investigates and quantifies how skewed, long-tailed data and unequal allocation jointly affect the deviation of Type~I error in two-sample $t$-tests.
Building on this analysis, we derive explicit conditions and a minimum sample size threshold that ensure reliable inference under general asymmetry and imbalance. 
For small-sample scenarios where these conditions are not met, we introduce an Edgeworth-based correction that effectively improves Type~I error control. 
Both theoretical analysis and large-scale empirical evaluations confirm the validity and robustness of our approach. 
The refined threshold $N_{\min}^{(2)}$ aligns closely with practical observations while substantially reducing conservatism, leading to meaningful savings in traffic and computational resources for large-scale online platforms. 
Future research will focus on extending this framework to multi-arm and sequential testing settings, where similar asymmetry-induced effects are expected to arise.

\balance
\bibliographystyle{ACM-Reference-Format}
\bibliography{skew}

\newpage
\appendix
\section{Proofs}

\begin{proof}[Proof of Theorem~\ref{thm:cumulants}]
Define $A=\bar Y-\mu_y$ and $B=\bar X-\mu_x$, so that
$D = A - B$ under $H_0$. 
Write $Z_i=X_i-\mu_x$ and $W_j=Y_j-\mu_y$ so that $E[Z_i]=E[W_j]= E[D] = 0$ and 
$E[Z_i^2]=\sigma_x^2$, $E[W_j^2]=\sigma_y^2$, 
$E[Z_i^3]=\gamma_x\sigma_x^3$, and $E[W_j^3]=\gamma_y\sigma_y^3$.
Then
\[
\bar X-\mu_x = \frac{1}{n_x}\sum_{i=1}^{n_x} Z_i,
\qquad
\bar Y-\mu_y = \frac{1}{n_y}\sum_{j=1}^{n_y} W_j.
\]
Expanding the cube,
To compute its third moment, expand the cube:
\[
\Biggl(\sum_{j=1}^{n_y} W_j\Biggr)^3
= \sum_{j=1}^{n_y} W_j^3
+ 3\sum_{\substack{j_1,j_2=1 \\ j_1\ne j_2}}^{n_y} W_{j_1}^2 W_{j_2}
+ 6\sum_{1\le j_1<j_2<j_3\le n_y} W_{j_1}W_{j_2}W_{j_3}.
\]
Taking expectations of each term, we have 
\[
E[W_{j_1}^2W_{j_2}] = E[W_{j_1}^2]E[W_{j_2}] = \sigma_y^2 \cdot 0 = 0, 
\]
\[
E[W_{j_1}W_{j_2}W_{j_3}] = E[W_{j_1}]E[W_{j_2}]E[W_{j_3}] = 0.
\]
so this sum also vanishes.
Hence,
\[
E\!\left[\Biggl(\sum_{j=1}^{n_y} W_j\Biggr)^3\right]
= n_y \gamma_y\sigma_y^3.
\]
Scaling by $n_y^{-3}$ we obtain
\[
E\!\left[(\bar Y-\mu_y)^3\right]
= \frac{1}{n_y^3}\cdot n_y \gamma_y\sigma_y^3
= \frac{\gamma_y\sigma_y^3}{n_y^2}.
\]
Analogously,
\[
E\!\left[(\bar X-\mu_x)^3\right]
= \frac{\gamma_x\sigma_x^3}{n_x^2}.
\]
Since $\bar X$ and $\bar Y$ are independent, all cross terms vanish when forming $D=\bar Y-\bar X$, and thus
\[
E[D^3] = E[(\bar Y-\mu_y)^3] - E[(\bar X-\mu_x)^3]
= \frac{\gamma_y\sigma_y^3}{n_y^2} - \frac{\gamma_x\sigma_x^3}{n_x^2}.
\]
Furthermore, we have the variance of $D$ is 
$\sigma_D^2 = \sigma_x^2/n_x + \sigma_y^2/n_y$.
Therefore, the skewness of mean difference is 
\begin{equation*}
    \gamma_D = \frac{E(D - ED)^3}{\sigma_D^3} = \frac{\gamma_y \sigma_y^3/n_y^2 - \gamma_x\sigma_x^3/n_x^2}{\sigma_D^3}
\end{equation*}
In special case of $\sigma_x = \sigma_y = \sigma$, we have
\begin{equation*}
    \gamma_D = \frac{\gamma_y - k^2\gamma_x^2}{(1+k)\sqrt{Nk}}.
\end{equation*}

Similarly, consider the fourth power. Expanding yields
\begin{align*}
E\!\left[(\bar Y-\mu_y)^4\right] 
&= \frac{1}{n_y^4}  E\!\left[\Bigl(\sum_{j=1}^{n_y} W_j\Bigr)^4\right] \\
&= \frac{1}{n_y^4} \Biggl(
\sum_{j=1}^{n_y} E[W_j^4] 
+ \sum_{j\ne j'} E[W_j^2 W_{j'}^2] \notag\\
&\quad\ + \sum_{j\ne j'} E[W_j^3 W_{j'}] 
+ \sum_{j\ne j'\ne j''} E[W_j^2 W_{j'} W_{j''}]
\Biggr). \label{eq:fourthmomentY}
\end{align*}
Since $E[W_j^4]=\tau_y\sigma_y^4$, $E[W_j^2 W_{j'}^2]=\sigma_y^4$ for $j\ne j'$ and the odd power terms expectation vanish, we obtain
\begin{align*}
E\!\left[(\bar Y - \mu_y)^4\right]
&= \frac{1}{n_y^4}E\!\left[\Bigl(\sum_{j=1}^{n_y} W_j\Bigr)^4\right] \\
&= \frac{1}{n_y^4}\Bigl( n_y E[W_1^4] 
+ 3 n_y(n_y-1)E[W_1^2]^2 \Bigr) \\
&= \frac{1}{n_y^4}\Bigl( n_y \tau_y\sigma_y^4 
+ 3 n_y(n_y-1)\sigma_y^4 \Bigr) \\
&= \frac{\tau_y+3(n_y-1)}{n_y^3}\sigma_y^4.
\end{align*}
An identical calculation for the control group, with 
$Z_i = X_i - \mu_x$ and kurtosis $\tau_x$, yields
\begin{equation*}
E\!\left[(\bar X - \mu_x)^4\right]
= \frac{\tau_x+3(n_x-1)}{n_x^3}\sigma_x^4.
\end{equation*}
Since
\begin{equation*}
D^4 = (A-B)^4 = A^4 - 4A^3B + 6A^2B^2 - 4AB^3 + B^4.
\end{equation*}
Because $E[A]=E[B]=0$ and $A,B$ are independent, we have
\[
E[A^3B]=E[A^3]E[B]=0, \quad 
E[AB^3]=E[A]E[B^3]=0, \]
Therefore,
\begin{equation*}
E[D^4] = E[A^4] + E[B^4] + 6E[A^2]E[B^2].
\end{equation*}
Substituting the expressions above,
\begin{align*}
E[D^4]
&= \frac{\tau_y+3(n_y-1)}{n_y^3}\sigma_y^4
+ \frac{\tau_x+3(n_x-1)}{n_x^3}\sigma_x^4
+ 6\frac{\sigma_y^2}{n_y}\frac{\sigma_x^2}{n_x}.
\end{align*}
Note that
\[
\frac{3\sigma_y^4}{n_y^2} + \frac{3\sigma_x^4}{n_x^2} 
+ 6\frac{\sigma_y^2}{n_y}\frac{\sigma_x^2}{n_x}
= 3\Bigl(\frac{\sigma_y^2}{n_y}+\frac{\sigma_x^2}{n_x}\Bigr)^2.
\]
Hence we can rewrite
\begin{equation*}
E[D^4]
= 3\Bigl(\frac{\sigma_y^2}{n_y}+\frac{\sigma_x^2}{n_x}\Bigr)^2
+ \frac{(\tau_y-3)\sigma_y^4}{n_y^3}
+ \frac{(\tau_x-3)\sigma_x^4}{n_x^3}.
\end{equation*}
Under $H_0$ we have $E[D]=0$, so the kurtosis of $D$ is
\begin{equation*}
\tau_D
= \frac{E[D^4]}{\sigma_D^4}
= 3 + \frac{(\tau_y-3)\sigma_y^4/n_y^3 + (\tau_x-3)\sigma_x^4/n_x^3}{\bigl(\sigma_y^2/n_y+\sigma_x^2/n_x\bigr)^2}.
\end{equation*}

If $\sigma_x=\sigma_y=\sigma$ and $n_y=kn_x$ for allocation ratio $k>0$, then
\begin{equation*}
\tau_D
= 3 + \frac{(\tau_y-3) + (\tau_x-3)k^3}{N(1+k)k}.
\end{equation*}

\end{proof}

\begin{proof}[Proof of Theorem~\ref{thm:deviation}]
Under the stated regularity conditions, the distribution function of the studentized Welch $t$ statistic
\[
T \;=\; \frac{\bar Y-\bar X}{\sqrt{\hat\sigma_x^2/n_x+\hat\sigma_y^2/n_y}}
\]
admits the Edgeworth expansion and uniform in $z$ \citep{hall2013bootstrap},
\begin{equation}
\label{equ:cdfee}
P(T\le z)
= \Phi(z)
+ \phi(z)\{q_1(z)
+ q_2(z)\}
+ O(N^{-3/2}),
\end{equation}
For a studentized mean, the correction polynomials are
\[
q_1(z)=\frac{\gamma_D}{6}(2z^2+1),
\]
\begin{align*}
q_2(z) &= \frac{\tau_D-3}{12}(z^3 - 3z)
- \frac{\gamma_D^2}{18}(z^5 + 2z^3 - 3z) \\
&\quad -\frac{(1+k)}{4N}\;
\frac{\bigl(k^3\sigma_x^4+\sigma_y^4\bigr)(z^3+3z)+2k(1+k)\sigma_x^2\sigma_y^2z}
{k\bigl(k\sigma_x^2+\sigma_y^2\bigr)^2},
\end{align*}
where $\gamma_D$ and $\tau_D$ are the skewness and 
kurtosis of the mean difference $D=\bar Y-\bar X$.
Using \eqref{equ:cdfee}, the right–tail error is
\begin{align*}
\alpha_R^*-\frac{\alpha}{2}
&= P(T\geq z_{1-\alpha/2})-\frac{\alpha}{2}\\
&= -\,\phi(z_{\alpha/2})\big[q_1(z_{\alpha/2})-q_2(z_{\alpha/2})\big]
+ O(N^{-3/2}).
\end{align*}
and, by symmetry of \eqref{equ:cdfee},
\[
\alpha_L^*-\frac{\alpha}{2}
= \phi(z_{\alpha/2})\big[q_1(z_{\alpha/2}) +q_2(z_{\alpha/2})\big]
  + O(N^{-3/2}).
\]
Since $\phi(x)$ is even and $z_{1-\alpha/2} = -z_{\alpha/2}$, the total deviations satisfy
\[
\Delta(\alpha) = |\alpha_L^* + \alpha_R^* - \alpha| = |q_2(z_\alpha)\phi(z_\alpha)| +O(N^{-3/2})
\]
From Eq.~\eqref{equ:gammad}, we have that $q_2(z)$ is of order $O(N^{-1})$. 
Hence, 
\begin{align*}
\alpha_L^* - \frac{\alpha}{2}
  &= \frac{\gamma_D}{6}\,\phi(z_{\alpha/2})(2z_{\alpha/2}^2+1)
     + O(N^{-1}),\\[4pt]
\alpha_R^* - \frac{\alpha}{2}
  &= -\frac{\gamma_D}{6}\,\phi(z_{\alpha/2})(2z_{\alpha/2}^2+1)
     + O(N^{-1}).
\end{align*}
and 
\[
\Delta(\alpha)
= O(N^{-1}).
\]
\end{proof}

\begin{proof}[Proof of Theorem~\ref{thm:threshold}]
Under the given conditions, the Edgeworth expansion \citep{hall2013bootstrap} for the test statistic $T$  is
\begin{equation*}
P(T \le z) = \Phi(z) + \phi(z)q_1(z) + O(N^{-1}).
\end{equation*} 
Considering the tail probability at $z = z_{1-\alpha/2}$, the error term is 
\begin{align*}
 |P(T \leq z_{1-\alpha/2}) - \Phi(z_{1-\alpha/2})| 
 &= |\frac{\gamma_D}{6}(2z_{1-\alpha/2}^2+1)\phi(z_{1-\alpha/2}) + O(N^{-1})| \\
&\leq \frac{a_1}{\sqrt{N}} + O(N^{-1})   
\end{align*}
The leading error is the first term with order at $O(N^{-1/2})$. 
Let $N_{\min}^{(1)} = (a_1/\epsilon)^2$. Then for $N\geq N_{\min}^{(1)}$, we have $a_1/\sqrt{N}\leq \epsilon$, and thus
\begin{equation*}
|P(T \le z_{1-\alpha/2}) - \Phi(z_{1-\alpha/2})| \leq \epsilon + O(N^{-1}) = \epsilon + O(\epsilon^2).
\end{equation*}
By symmetry, the same bound holds for the other tail. Therefore,
\begin{equation*}
\max\{|\alpha_L^*-\frac{\alpha}{2}|, |\alpha_R^*-\frac{\alpha}{2}|\} \leq \epsilon + O(\epsilon^2),
\end{equation*}
completing the proof of part (i).

When the sixth moments exist, the Edgeworth expansion \citep{hall2013bootstrap} becomes
\begin{equation*}
P(T\le z)
= \Phi(z)
+ \phi(z)\{q_1(z)
+ q_2(z)\}
+ O(N^{-3/2}),
\end{equation*}
The errors in the tail probabilities are
\begin{equation*}
|\alpha_L^*- \alpha/2| = |a_2/N + a_1/\sqrt{N}|,\quad    |\alpha_R^*- \alpha/2| = |a_2/N - a_1/\sqrt{N}|,
\end{equation*}
where $a_1$ and $a_2$ are constants determined by the skewness and kurtosis terms as defined in the theorem.
Let $u = N^{-1/2}$. To ensure that both tail errors remain below a given tolerance $\epsilon$, we require
\begin{equation}
\label{equ:two}
|a_2 u^2 + a_1 u| \leq \epsilon, \qquad |a_2 u^2 - a_1 u| \leq \epsilon.
\end{equation}
Solving \eqref{equ:two} gives the admissible range of $u$ as
$0 < u \le (\sqrt{a_1^2 - 4|a_2|\epsilon} - |a_1|)/(2|a_2|)$ when $a_1^2 - 4|a_2|\epsilon \ge 0$, and $0 < u \le (\sqrt{a_1^2 + 4|a_2|\epsilon} - |a_1|)/(2|a_2|)$ otherwise.

Combining both cases, a sufficient bound on the sample size is
\begin{equation*}
N \geq N_{\min}^{(2)} = \left(
  \frac{|a_1| + \sqrt{\,a_1^2 - 4|a_2|\epsilon \operatorname{sign}\bigl(a_1^2 - 4|a_2|\epsilon\bigr)}}
       {2\epsilon}
  \right)^{2}.
\end{equation*}
Therefore, whenever $N \geq N_{\min}^{(2)}$, both tails satisfy
\[
\max\{|\alpha^*_R-\alpha/2|,\,|\alpha^*_L-\alpha/2|\}\leq \epsilon + O(\epsilon^3),
\]
which guarantees that each tail error remains within the prescribed tolerance $\epsilon$.

\end{proof}

\end{document}